%% file: archive.tex
\documentclass[runningheads]{llncs}
\usepackage[T1]{fontenc}
\usepackage{graphicx}
\usepackage{booktabs}
\usepackage[misc]{ifsym}
\usepackage{amsmath}
\usepackage{amssymb}
\usepackage{multirow}
\usepackage{makecell}
\usepackage{adjustbox}   
\usepackage{siunitx}
\usepackage{threeparttable}
\usepackage{graphicx}
\usepackage{subfig}
\usepackage{bm}

\usepackage{color}

\usepackage{mwe}
\input{mysymbol.sty}

\begin{document}

\title{Train to Stitch: Topology-Preserving Regularization for Interoperable Representations}
\author{Akshay Malhotra\inst{1}\thanks{Equal Contribution.} \and
Nicolas Zilberstein\inst{2*} \and
Shahab Hamidi-Rad\inst{1}}

\institute{InterDigital Communications Inc., Los Altos, CA, USA 
\and 
Rice University, Houston, TX, USA 
}
\titlerunning{Train to Stitch}
\maketitle

\begin{abstract}
As large, powerful pre-trained models become more prevalent, there is growing interest in developing methods that allow different neural networks to be seamlessly combined with little or no direct communication between them. 
Take autoencoder-based data compression as an example: the ability to independently choose architectures and train encoder and decoder models, while still ensuring they work together, could fundamentally transform how such models are built, deployed, and updated. 
In this work, we introduce a novel approach that leverages topological regularization to align the latent spaces of two independently trained autoencoder models, entirely eliminating the need for coordination during training.
We provide a conditional theoretical analysis showing how the structural agreement encouraged by our regularizer can translate into approximate geometric compatibility between independently learned representations.
Reconstruction experiments validate this approach empirically, substantially narrowing the interoperability gap across multiple datasets.
\end{abstract}

\section{Introduction}
\label{sec:intro}

Compressing data into compact, meaningful representations is a fundamental problem with broad applications spanning image compression~\cite{image1,image2}, audio compression~\cite{audio1}, wireless communication~\cite{csinet,csi_comp2}, and beyond. 
A common neural network approach trains an encoder-decoder pair end-to-end, where the encoder maps the data into a low-dimensional latent space, and the decoder reconstructs the original input from this latent representation.
In many practical settings, however, encoders and decoders cannot be trained jointly. 
For example, a client-side encoder may face strict hardware or latency constraints while the server-side decoder is updated frequently, or separate organizations may build their own encoders and decoders without sharing model weights or training data. 
These scenarios call for \emph{interoperable} designs: encoder and decoder components that are trained independently yet can be connected at deployment time with little or no coordination.

This challenge has been explored under a particular instantiation of this problem, termed \emph{zero-shot stitching}~\cite{gygli2021towards}: given two independently trained autoencoders, the goal is to find a transformation between latent spaces from only a small calibration set of shared inputs.
An early work~\cite{moschellarelative} proposed mapping each autoencoder's latent space into a shared relative representation.
A key limitation, however, is that this requires training a dedicated decoder in the relative space, which demands substantial amounts of data. Other recent approaches~\cite{lahner2024direct,maiorca2024latent} have investigated whether a simple affine mapping can align independently learned latent spaces. 
While promising, a notable accuracy gap persists compared to jointly trained autoencoders, largely because each encoder's nonlinear mapping is shaped by its architecture, optimization hyperparameters, and random initialization, producing latent geometries that can differ considerably across models. This effect is exacerbated in the small-data regime, and closing the gap requires large calibration sets, contradicting the premise of minimal interaction and limited shared data.

To close this gap, we take a complementary approach: rather than relying on the existence of a simple post-hoc, linear map, we introduce a training-time inductive bias that encourages independently trained models to produce \emph{geometrically compatible} latent spaces. 
Our key insight is that many alignment assumptions can be understood through the lens of (approximate) \emph{isometry}: if two point clouds are nearly distance-preserving copies of the same underlying structure, then an orthogonal alignment is well-justified (Section~\ref{sec:theory}). 
This motivates regularizing training so that each latent space preserves the same \emph{geometric skeleton} of the data.

To instantiate this idea, we build on topological autoencoders~\cite{moor2020topological} and employ a differentiable topological loss that aligns $0$-dimensional persistence features between a target space and a learnable space.\footnote{For $0$-dimensional persistent homology, the persistence pairing corresponds to the edges of the minimum spanning tree~\cite{hensel2021survey}.} 
Concretely, this topological loss encourages agreement of minimum spanning tree (MST) edge lengths between two spaces by leveraging the persistence pairings in each space. Intuitively, MST edges form a sparse topological backbone that captures global connectivity; matching this backbone across models yields latent spaces that are less sensitive to architectural idiosyncrasies and hence easier to stitch.

Building on this foundation, we propose a strategy that regularizes each autoencoder so that its latent space preserves the topological structure of the input space, requiring no information exchange between model owners (Fig.~1). 
After training, we fit a simple linear transformation using only a small calibration set.
We further provide a theoretical analysis explaining why this regularization yields compatible latent spaces.
We validate our approach through several numerical experiments spanning image compression and channel compression and reconstruction.

\paragraph{Contributions.} The main contributions of this paper are summarized as follows:
\begin{enumerate}
\setlength\itemsep{0.5em}
\item \textbf{Topology-preserving training for interoperability.} We introduce a topological regularization framework for independently trained autoencoders that promotes latent-space compatibility using only 0-dimensional persistence features (MST edges). This loss requires no model-to-model interaction during training and enables post-training model stitching via a simple linear map learned from a small calibration set.

\item \textbf{Theoretical motivation for linear stitching.} We show that reference-MST edge agreement controls disagreement between the induced tree-path metrics. Under the additional assumption of bounded tree stretch, we derive a bound on Euclidean pairwise-distance disagreement, providing a geometric motivation for fitting a low-complexity linear connector (Section~\ref{sec:theory}).

\item \textbf{Extensive multi-domain evaluation.} We validate our approach on reconstruction and retrieval tasks across heterogeneous architectures. Our method demonstrates substantial reductions in stitching error compared to baseline direct alignment and relative representations, achieving near-optimal performance across multiple datasets.

\end{enumerate}

\begin{figure*}[t]
    \centering
    \includegraphics[width=1\textwidth]{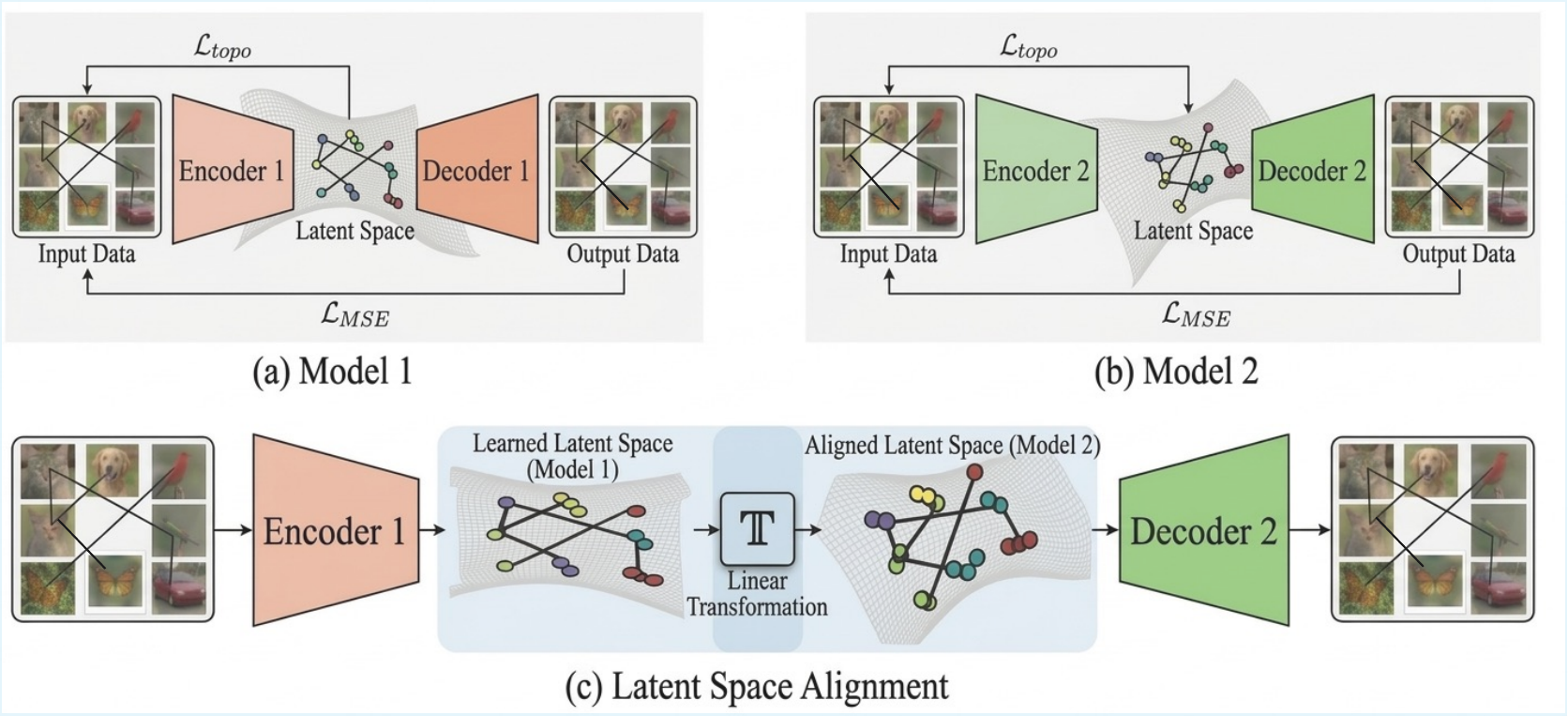}
	\caption{\textbf{Proposed scheme}: (a-b) \emph{Topological autoencoders.}
We regularize the training of each autoencoder $\mathrm{AE}_1, \mathrm{AE}_2$ with the input data, preserving its topological structure in the latent space independently of the architecture of the encoder.
We build graphs in both input and latent spaces, and we leverage the topological regularization to preserve the underlying connectivity.
 Notice that each autoencoder maps the input data to a different latent space, but both latent spaces preserve the topology up to a rotation and scaling.
 c) \emph{Stitching between two regularized autoencoders}. The transformation is simplified, leading to an affine transformation.}
    \label{fig:scheme1}
\end{figure*}

\section{Related Work}

This work is primarily concerned with the interoperability of independently trained autoencoders, building on two lines of research: zero-shot stitching and geometric regularization of autoencoders.

\subsection{Zero-shot stitching}

Relative representation~\cite{moschellarelative} is one of the most popular papers in zero-shot stitching.
The work builds on the observation that the representations learned by different neural networks trained on the same data are related via conformal maps; in other words, the angles between latent embeddings are preserved.
Based on this observation, they propose to map the latent space of each autoencoder to a pre-defined relative representation, which is invariant by
construction to the transformations induced by stochastic factors in the training process.
More precisely, they select at random a set of anchor elements $\ccalA \in \ccalD_{train}$;
every sample in the anchor set is represented concerning the embedded anchors $\boldsymbol{e}_{\boldsymbol{a}^{(j)}}=\ccalE\left(\boldsymbol{a}^{(j)}\right)$.
Then, the relative representation is defined as
\begin{equation}
\begin{aligned}
\boldsymbol{r}_{\boldsymbol{x}^{(i)}}=\big(&
\operatorname{sim}(\boldsymbol{e}_{\boldsymbol{x}^{(i)}},
                   \boldsymbol{e}_{\boldsymbol{a}^{(1)}}),
\operatorname{sim}(\boldsymbol{e}_{\boldsymbol{x}^{(i)}},
                   \boldsymbol{e}_{\boldsymbol{a}^{(2)}}),\\
&
\ldots, \operatorname{sim}(\boldsymbol{e}_{\boldsymbol{x}^{(i)}},
                   \boldsymbol{e}_{\boldsymbol{a}^{(|\ccalA|)}})\big)
\end{aligned}
\end{equation}
where $\operatorname{sim}: \mathbb{R}^d \times \mathbb{R}^d \rightarrow \mathbb{R}$ is a similarity function, yielding a scalar score $r$ between two absolute representations $r=\operatorname{sim}\left(\boldsymbol{e}_{\boldsymbol{x}^{(i)}}, \boldsymbol{e}_{\boldsymbol{x}^{(j)}}\right)$.
In particular, they consider the cosine similarity.
Notice that the number of anchor points gives the dimensionality of this new similarity space; therefore, it is expected to improve the performance when increasing the number of anchor points.
This proposed method requires a specialized decoder trained on the relative representation.

While in the original paper, the authors considered only the cosine similarity, other similarity functions can be incorporated~\cite{cannistracibricks,cannistraci2023infusing}.
A recent work~\cite{fumero2024latent} proposes a method to do zero-shot stitching that leverages functional maps, i.e., by learning a linear transformation in the spectral domain.

A closely related line of work \cite{lahner2024direct,maiorca2024latent} studies post-hoc alignment of independently learned latent spaces using simple parametric maps, enabling ``zero-shot'' translation between models without retraining the encoders/decoders.

\subsection{Autoencoders via geometric regularization}

This work focused on topological autoencoders; however, several previous works incorporate a geometric regularization~\cite{lee2023geometric}.
Many of them build on the notion of \emph{curvature} of a generator~\cite{arvanitidis2018latent}.
In~\cite{lee2021neighborhood}, the authors propose first to build a neighborhood graph on the input data and use it to regularize the geometry
and connectivity of the learned manifold.
Remarkably, they propose a local approximation of the decoder instead of the encoder to extract local geometric information on the decoded manifold.
In~\cite{yonghyeon2021regularized}, a regularization based on conformal mapping, i.e., a mapping that preserves angles, is used to enforce the preservation of angles and relative distances between data and latent space.
In~\cite{duque2022geometry}, inspired by manifold learning, a regularization based on spectral embeddings is incorporated, encouraging the learned latent representation to align with the data geometry.
Recently, a graph-preserving autoencoder was proposed in~\cite{limgraph}, which aims to preserve a pre-build Laplacian between the input data and the latent space.

\section{Alignment with topological constraints}
\label{sec:method}


\subsection{Problem formulation}
\label{sec:problem_form}
We consider two autoencoders, $\mathrm{AE}_1=(E_1,D_1)$ and $\mathrm{AE}_2=(E_2,D_2)$, where $E_i$ and $D_i$ denote the encoder and decoder of autoencoder $i$, respectively. Both autoencoders are trained \emph{independently} on the same data distribution; that is,
they may differ in architecture, initialization, and optimization trajectory.
Given an input $x \in \reals^l$, each encoder produces a latent code $z_i = E_i(x) \in \mathbb{R}^d$.
Our goal is to enable interoperability between these models: connecting the encoder of one with the decoder of the other.
Formally, we seek to perform \emph{zero-shot stitching}; that is, cross-model decoding of the form $D_2(T(E_1(x)))$, or symmetrically $D_1(T(E_2(x)))$, where $T(\cdot)\colon\mathbb{R}^d\rightarrow\mathbb{R}^d$ is an affine transformation learned from only a small calibration set with minimal information exchange between the two models.

\paragraph{Direct alignment between latent spaces.}
Recent works~\cite{lahner2024direct,maiorca2024latent} proposed to directly align the latent spaces by fitting a transformation $T$ using a small dataset $D_T$ of shared inputs.
Denoting the latent embeddings of the calibration set as
\[
\bbZ_1 := \{E_1(x)\mid x\in D_T\},\qquad \bbZ_2 := \{E_2(x)\mid x\in D_T\},
\]
the transformation is estimated by minimizing the mean squared discrepancy
between the two point clouds:
\begin{equation}
\label{eq:direct-alignment}
T(\cdot) = \arg\min_{T(\cdot)\in\mathcal{T}} \|\bbZ_1 - T(\bbZ_2)\|^2.
\end{equation}

These methods typically restrict $\mathcal{T}$ to affine or orthogonal maps, which offers simplicity but leaves a significant performance gap compared to end-to-end training. 
This suggests that the latent spaces produced by unconstrained, independent training are not well-aligned by such maps: each encoder induces a nonlinear geometry that is strongly influenced by architecture, optimization, and random initialization, making $Z_1$ and $Z_2$ poorly compatible under a simple affine fit.

Conversely, we view the existence of a low-complexity alignment map as a
\emph{geometric consequence} of the two latent point clouds being approximately
isometric. 
Recall the classical result that if two centered point clouds have
identical pairwise distance matrices, then they are related by an orthogonal
transformation. This is formalized in Lemma~\ref{lem:1}.

\begin{lemma}
\label{lem:1}
Let $\mathbf{Z},\mathbf{Y}\in\mathbb{R}^{n\times d}$ be matrices whose rows are the points
$\{z_i\}_{i=1}^n$ and $\{y_i\}_{i=1}^n$ and assume they are centered:
$\mathbf{Z}^\top \mathbf{1}=\mathbf{Y}^\top \mathbf{1}=0$.
Let $\Delta(\mathbf{Z})\in\mathbb{R}^{n\times n}$ denote the squared distance matrix,
$\Delta(\mathbf{Z})_{ij}=\|z_i-z_j\|_2^2$ (and similarly for $\Delta(\mathbf{Y})$).
If $\Delta(\mathbf{Z})=\Delta(\mathbf{Y})$, then there exists $\mathbf{Q}\in\mathbb{R}^{d\times d}$
with $\mathbf{Q}^\top \mathbf{Q}=I$ such that $\mathbf{Y}=\mathbf{Z}\mathbf{Q}$.
\end{lemma}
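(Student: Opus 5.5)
The plan is to pass from squared distances to Gram matrices by double centering, and then to extract the orthogonal map from the equality of the Gram matrices. Write $G_Z=\mathbf{Z}\mathbf{Z}^\top$ and $G_Y=\mathbf{Y}\mathbf{Y}^\top$. Since $\Delta(\mathbf{Z})_{ij}=\|z_i\|^2+\|z_j\|^2-2\langle z_i,z_j\rangle$, we have
\[
\Delta(\mathbf{Z}) = g\mathbf{1}^\top+\mathbf{1}g^\top-2G_Z, \qquad g=\mathrm{diag}(G_Z).
\]
Let $J=I-\tfrac1n\mathbf{1}\mathbf{1}^\top$ be the centering projector. Then $J\mathbf{1}=0$, and centering gives $J\mathbf{Z}=\mathbf{Z}$, hence $JG_ZJ=G_Z$. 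Applying $-\tfrac12 J(\cdot)J$ to the identity above therefore yields $G_Z=-\tfrac12 J\Delta(\mathbf{Z})J$. The same computation gives $G_Y=-\tfrac12 J\Delta(\mathbf{Y})J$. The hypothesis $\Delta(\mathbf{Z})=\Delta(\mathbf{Y})$ thus gives $\mathbf{Z}\mathbf{Z}^\top=\mathbf{Y}\mathbf{Y}^\top$.

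Next I show that equal Gram matrices imply $\mathbf{Y}=\mathbf{Z}\mathbf{Q}$ for some orthogonal $\mathbf{Q}$. I will use the SVD. First, $\|\mathbf{Z}^\top a\|^2=a^\top G_Z a=\|\mathbf{Y}^\top a\|^2$ for every $a\in\mathbb{R}^n$. So the map $\phi:\mathrm{row}(\mathbf{Z})\to\mathrm{row}(\mathbf{Y})\subset\mathbb{R}^d$, defined by $\mathbf{Z}^\top a\mapsto \mathbf{Y}^\top a$, is well defined: if $\mathbf{Z}^\top a=\mathbf{Z}^\top b$, then $\|\mathbf{Y}^\top(a-b)\|=\|\mathbf{Z}^\top(a-b)\|=0$. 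The map $\phi$ is also linear and norm-preserving, hence an isometry between two subspaces of $\mathbb{R}^d$. Both subspaces have dimension $\mathrm{rank}(G_Z)=\mathrm{rank}(G_Y)$, so their orthogonal complements have equal dimension. I extend $\phi$ by any isometry between the orthogonal complements. This gives an orthogonal $\mathbf{R}\in\mathbb{R}^{d\times d}$ with $\mathbf{R}z_i=y_i$ for every row, taking $a=e_i$. Transposing, $\mathbf{Y}=\mathbf{Z}\mathbf{Q}$ with $\mathbf{Q}=\mathbf{R}^\top$, which satisfies $\mathbf{Q}^\top\mathbf{Q}=I$.

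The main subtlety is the rank-deficient case: when the points span a proper subspace, $\mathbf{Q}$ is not unique and the map must be extended on the orthogonal complement. The extension step handles this. Equivalently, one can take a thin SVD $\mathbf{Z}=U\Sigma V_1^\top$; from $G_Z=G_Y$, one can choose $\mathbf{Y}=U\Sigma W_1^\top$ with $W_1$ having orthonormal columns, and complete $V_1,W_1$ to orthogonal matrices $V,W$ to obtain $\mathbf{Q}=VW^\top$.

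The double-centering identity is routine. Its only requirement is centering; without it the conclusion would need an added translation.
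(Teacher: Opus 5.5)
Your proof is correct. Its first step, double centering to obtain $\mathbf{Z}\mathbf{Z}^\top=\mathbf{Y}\mathbf{Y}^\top$, is the same as the paper's. The only difference there is that you verify $\mathbf{Z}\mathbf{Z}^\top=-\tfrac12 J\Delta(\mathbf{Z})J$, whereas the paper cites it as the classical MDS identity.

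The second step is genuinely different. The paper \emph{assumes} that $\mathbf{Z}$ has full column rank $d$ and takes a thin SVD $\mathbf{Z}=U\Sigma V^\top$. It then asserts that $\mathbf{Y}=U\Sigma W^\top$ for some orthogonal $W$ and sets $\mathbf{Q}=VW^\top$. You instead define the linear isometry $\mathbf{Z}^\top a\mapsto\mathbf{Y}^\top a$ on the span of the points, then extend it by an arbitrary isometry between the orthogonal complements. This needs no rank hypothesis.

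That is a real gain. The full-rank assumption is not part of the lemma. Since centering forces $\mathrm{rank}(\mathbf{Z})\le n-1$, the assumption fails whenever the centered points do not span $\mathbb{R}^d$, in particular whenever $n\le d$. So the paper's argument covers only the generic case, while yours proves the statement as written. Your SVD remark, completing $V_1,W_1$ to orthogonal $V,W$, is exactly the patch that makes the paper's route general.

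Both SVD versions also rest on a check the paper leaves implicit. One needs $W_1:=\mathbf{Y}^\top U\Sigma^{-1}$ to have orthonormal columns. One also needs $U\Sigma W_1^\top=UU^\top\mathbf{Y}=\mathbf{Y}$, which holds because $\mathbf{Y}\mathbf{Y}^\top=U\Sigma^2U^\top$ forces $\mathrm{col}(\mathbf{Y})=\mathrm{col}(U)$. Your extension argument sidesteps this check.

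In exchange, the paper's route gives a concrete formula $\mathbf{Q}=VW^\top$ read off from SVDs. In the full-rank case that $\mathbf{Q}$ is unique, with no arbitrary choice on a complement.
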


\begin{proof} See Appendix A.1. \nocite{borg2005modern}
\end{proof}

Lemma~\ref{lem:1} motivates why a rigid alignment --- either orthogonal or affine --- is a natural choice when the two representations associated to each latent space $\bm{Z_1}$ and $\bm{Z_2}$ are approximately isometric.
However, independent training does not guarantee this property.
Our proposed framework aims to address this gap: we introduce a topological regularizer that explicitly encourages each encoder to produce latent spaces with compatible geometry, making a simple post-hoc alignment viable.


\subsection{Topological regularization to promote alignability}
\label{sec:topo-reg}

We now introduce our proposed framework for training autoencoders with compatible latent spaces.
The key idea is to \emph{promote geometric compatibility} between independently trained latent spaces by adding a geometric constraint during training. Specifically, we encourage the latent space to preserve the topological structure of a \emph{reference} point cloud via a differentiable topological loss $L_{\mathrm{topo}}(\cdot,\cdot)$ based on persistent homology \cite{moor2020topological}.
In particular, we focus on $0$-dimensional persistence features, for which the persistence pairing boils down to the edges of the MST. Intuitively, the MST's edges form a sparse ``topological skeleton'' capturing global connectivity of the point cloud; matching these edges across spaces is a lightweight way to encourage comparable geometry.

\paragraph{Topological loss.}
Given a target space $X:=\{x_i\}_{i=1}^n$ and a learnable space $Z:=\{z_i\}_{i=1}^n$, let $P$ be the set
of all unordered pairs and $A_X$ (resp.\ $A_Z$) the vector of all pairwise distances in $X$ (resp.\ $Z$).
Let $P_X$ and $P_Z$ denote the persistence pairing indices in the target and learnable spaces (MST edges
when $d=0$). Denoting
\[
\begin{aligned}
D_{XX}&:=A_X[P_X], & D_{ZZ}&:=A_Z[P_Z],\\
D_{ZX}&:=A_Z[P_X], & D_{XZ}&:=A_X[P_Z],
\end{aligned}
\]
the topological loss is
\begin{equation}
\label{eq:topo-loss}
L_{\mathrm{topo}}(X,Z)
= \tfrac12\|D_{XX}-D_{ZX}\|_F^2 + \tfrac12 \|D_{XZ}-D_{ZZ}\|_F^2.
\end{equation}

\subsubsection{MST-based topological regularization promotes linear alignability.}
\label{sec:theory}

Our loss \eqref{eq:topo-loss} only constrains a \emph{sparse} subset of distances (MST edges).
We next show that while \eqref{eq:topo-loss} does not, in general, guarantee isometry, it encourages geometric agreement along a shared MST backbone; under additional measurable conditions, this can be sufficient to imply approximate near-isometry.
The first term of \eqref{eq:topo-loss} enforces agreement between target MST edge lengths and those same
edges measured in the learnable space.



Let $T_X=(V,P_X)$ denote the MST graph on vertex set $V=\{1,\dots,n\}$ with edge set $P_X$.
For a point cloud $Z$, define the \emph{tree-path metric} induced by $T_X$:
\[
d_{T_X}^{Z}(i,j) := \sum_{(u,v)\in\mathrm{path}_{T_X}(i,j)} \|z_u-z_v\|_2,
\]
where $\mathrm{path}_{T_X}(i,j)$ is the unique path between $i$ and $j$ in the tree (as it is an MST).

Let $\mathrm{diam}(T_X)$ be the tree diameter in number of edges:
$\mathrm{diam}(T_X):=\max_{i,j}|\mathrm{path}_{T_X}(i,j)|$.

\begin{lemma}[Edge mismatch controls all tree-path distances]
\label{lem:tree_metric}
For any two point clouds $Z$ and $Y$ on the same index set and any tree $T_X=(V,P_X)$,
\[
\begin{aligned}
\max_{i,j} \big| d_{T_X}^{Z}(i,j) - d_{T_X}^{Y}(i,j) \big|
&\le \sqrt{\mathrm{diam}(T_X)}\\
&\quad {}\times\|A_Z[P_X]-A_Y[P_X]\|_2.
\end{aligned}
\]
where $A_Z[P_X]$ is the vector of edge lengths $\{\|z_u-z_v\|_2 : (u,v)\in P_X\}$ (and similarly for $A_Y[P_X]$).
\end{lemma}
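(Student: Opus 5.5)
The plan is to fix an arbitrary pair $(i,j)$, expand both tree-path distances along the same path, and then apply the triangle inequality followed by Cauchy--Schwarz. The key structural point is that both $d_{T_X}^{Z}$ and $d_{T_X}^{Y}$ use the \emph{same} tree $T_X$. The path $\mathrm{path}_{T_X}(i,j)$ is therefore identical for the two point clouds, and only the edge lengths differ. Writing $a_e := \|z_u-z_v\|_2$ and $b_e := \|y_u-y_v\|_2$ for $e=(u,v)\in P_X$, we get
\[
d_{T_X}^{Z}(i,j) - d_{T_X}^{Y}(i,j) = \sum_{e\in \mathrm{path}_{T_X}(i,j)} (a_e - b_e).
\]

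Next I will bound the absolute value by $\sum_{e\in\mathrm{path}}|a_e-b_e|$. Cauchy--Schwarz against the all-ones vector on the path's edges then gives
\[
\sum_{e\in\mathrm{path}}|a_e-b_e| \le \sqrt{|\mathrm{path}_{T_X}(i,j)|}\,\Big(\sum_{e\in \mathrm{path}}(a_e-b_e)^2\Big)^{1/2}.
\]
Two further bounds finish the estimate. The path length satisfies $|\mathrm{path}_{T_X}(i,j)|\le \mathrm{diam}(T_X)$ by definition. Since the path's edges form a subset of $P_X$ and all summands are nonnegative, the partial sum of squares is at most $\|A_Z[P_X]-A_Y[P_X]\|_2^2$. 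Combining these yields the claimed bound for every pair $(i,j)$, and taking the maximum over pairs concludes the argument. The case $i=j$ is trivial, with an empty path giving difference $0$.

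I do not expect a real obstacle here. The only care needed is in the bookkeeping: the path must be a simple path, so that each edge appears at most once. This holds by uniqueness of paths in a tree, and it is what allows the sum of squares over the path to be dominated by the full edge-mismatch vector. It is also worth noting that the statement holds for any tree on $V$, not specifically an MST, because the MST property is never used.
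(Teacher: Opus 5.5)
Your proposal is correct and follows essentially the same route as the paper's proof: expand both tree-path distances along the common path, apply the triangle inequality, use Cauchy--Schwarz to get a factor $\sqrt{m}$ times the $\ell_2$ norm of the edge-mismatch vector, and bound $m \le \mathrm{diam}(T_X)$. Your remarks on why the partial sum of squares is dominated by $\|A_Z[P_X]-A_Y[P_X]\|_2^2$ (each edge appears once on a simple tree path), and that the MST property is never used, make explicit points the paper leaves implicit.
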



\begin{proof} See Appendix A.2.
\end{proof}

Lemma~\ref{lem:tree_metric} yields an \emph{unconditional} form of compatibility.
Specifically, if two latent spaces approximately match the same reference MST edge lengths on $P_X$
(equivalently, the edge residuals $\varepsilon_1,\varepsilon_2$ are small), then they induce similar
\emph{tree-path metrics} on $T_X$: for all pairs $(i,j)$,
$d^{Z_1}_{T_X}(i,j)\approx d^{Z_2}_{T_X}(i,j)$, with an explicit bound provided by
Lemma~\ref{lem:tree_metric}. This tree-metric agreement is the quantity most directly controlled by our
topological regularizer.

Further, the approximate agreement of \emph{Euclidean} pairwise distances
$\|z^{(1)}_i-z^{(1)}_j\|_2 \approx \|z^{(2)}_i-z^{(2)}_j\|_2$ (near-isometry), requires an additional
condition relating the tree-path metric to Euclidean distance in each latent space.
We capture this via a bounded stretch assumption (Definition~\ref{def:stretch}).


\begin{definition}[Tree stretch with respect to $T_X$]
\label{def:stretch}
For a point cloud $Z$, define the stretch
\[
\tau(Z;T_X) := \max_{i\neq j} \frac{d_{T_X}^{Z}(i,j)}{\|z_i-z_j\|_2}\in[1,\infty].
\]
\end{definition}

Small $\tau(Z;T_X)$ means the MST-path metric is a good proxy for Euclidean distances in $Z$.
This quantity is dataset-dependent but can be estimated by sampling pairs.

Bounded tree stretch is \emph{not} guaranteed by the MST in the worst case; we therefore treat it as a
measurable diagnostic. Under bounded stretch, tree-metric agreement can be lifted to approximate Euclidean distance agreement, quantified by a bound on $\|\Delta(\mathbf{Z}_1)-\Delta(\mathbf{Z}_2)\|_F$
(Theorem~\ref{thm:mst_to_linear}).


\begin{theorem}[MST/topological regularization promotes linear stitchability]
\label{thm:mst_to_linear}
Let $\mathbf{Z}_1,\mathbf{Z}_2\in\mathbb{R}^{n\times d}$ be latent point clouds (rows are the points
$\{z^{(1)}_i\}_{i=1}^n$ and $\{z^{(2)}_i\}_{i=1}^n$) and assume they are centered:
$\mathbf{Z}_1^\top 1=\mathbf{Z}_2^\top 1=0$.
Fix a reference tree $T_X=(V,P_X)$ with vertex set $V=\{1,\dots,n\}$ and edge set $P_X$
(the $0$-dimensional persistence pairing in the data space). For $k\in\{1,2\}$ define the MST edge-mismatch residual
\[
\varepsilon_k := \big\|A_{Z_k}[P_X]-A_X[P_X]\big\|_2,
\]
Assume a bounded tree-stretch condition holds for both clouds (Definition~\ref{def:stretch}):
$\tau(Z_1;T_X)\le\tau$ and $\tau(Z_2;T_X)\le\tau$.

Then the squared-distance matrices satisfy,

\[
\begin{aligned}
\|\Delta(\mathbf{Z}_1)-\Delta(\mathbf{Z}_2)\|_F
&\le 2R\sqrt{n(n-1)}\Bigl(2(\tau-1)R\\
&\qquad {}+\sqrt{\mathrm{diam}(T_X)}
(\varepsilon_1+\varepsilon_2)\Bigr).
\end{aligned}
\]
Here, $R := \max_{i\neq j}\max\{\|z^{(1)}_i-z^{(1)}_j\|_2,\ \|z^{(2)}_i-z^{(2)}_j\|_2\}$.
\end{theorem}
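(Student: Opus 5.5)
Write $a_{ij}=\|z^{(1)}_i-z^{(1)}_j\|_2$ and $b_{ij}=\|z^{(2)}_i-z^{(2)}_j\|_2$. The plan is to bound the entrywise difference $|a_{ij}-b_{ij}|$ uniformly over pairs $i\neq j$, and then pass to squared distances and to the Frobenius norm. Diagonal entries of $\Delta(\mathbf{Z}_1)-\Delta(\mathbf{Z}_2)$ vanish. For the off-diagonal ones we factor
\[
|a_{ij}^2-b_{ij}^2| = (a_{ij}+b_{ij})\,|a_{ij}-b_{ij}| \le 2R\,|a_{ij}-b_{ij}|
\]
by the definition of $R$. Summing the squares over the $n(n-1)$ ordered off-diagonal pairs and taking a square root then gives
\[
\|\Delta(\mathbf{Z}_1)-\Delta(\mathbf{Z}_2)\|_F\le 2R\sqrt{n(n-1)}\,\max_{i\neq j}|a_{ij}-b_{ij}|.
\]
It therefore suffices to show
\[
|a_{ij}-b_{ij}|\le 2(\tau-1)R+\sqrt{\mathrm{diam}(T_X)}\,(\varepsilon_1+\varepsilon_2).
\]

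For the entrywise bound, insert the tree-path metrics and use the triangle inequality:
\[
|a_{ij}-b_{ij}| \le |a_{ij}-d^{Z_1}_{T_X}(i,j)| + |d^{Z_1}_{T_X}(i,j)-d^{Z_2}_{T_X}(i,j)| + |d^{Z_2}_{T_X}(i,j)-b_{ij}|.
\]

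\textbf{Outer terms.} The Euclidean triangle inequality along the tree path gives $d^{Z_1}_{T_X}(i,j)\ge a_{ij}$. The stretch assumption gives $d^{Z_1}_{T_X}(i,j)\le\tau a_{ij}$. Hence
\[
0\le d^{Z_1}_{T_X}(i,j)-a_{ij}\le(\tau-1)a_{ij}\le(\tau-1)R,
\]
and the same holds for $Z_2$. Together these supply the $2(\tau-1)R$ term.

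\textbf{Middle term.} Apply Lemma~\ref{lem:tree_metric} with the reference point cloud $X$ as an intermediate. Note that $d^{X}_{T_X}$ is defined by the same formula. This gives
\[
|d^{Z_1}_{T_X}-d^{X}_{T_X}|\le\sqrt{\mathrm{diam}(T_X)}\,\varepsilon_1 \quad\text{and}\quad |d^{X}_{T_X}-d^{Z_2}_{T_X}|\le\sqrt{\mathrm{diam}(T_X)}\,\varepsilon_2.
\]
The triangle inequality then bounds the middle term by $\sqrt{\mathrm{diam}(T_X)}(\varepsilon_1+\varepsilon_2)$. Lemma~\ref{lem:tree_metric} is stated for any two clouds on the same index set, and $X$ qualifies even though it lives in $\mathbb{R}^l$ rather than $\mathbb{R}^d$: the lemma only uses edge lengths. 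One can also see this directly, since the path difference is a sum of at most $\mathrm{diam}(T_X)$ edge-length differences, and Cauchy--Schwarz bounds it by $\sqrt{\mathrm{diam}}$ times the $\ell_2$ norm of the residual vector.

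Combining the three pieces yields the claimed entrywise bound and hence the theorem. The centering hypothesis is not needed for the inequality itself; it only matters for invoking Lemma~\ref{lem:1} afterward to interpret the bound as near-orthogonal alignability.

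There is no deep obstacle. The one point needing care is the middle term: the dimensions of $X$ and $Z_k$ differ, so we must confirm that Lemma~\ref{lem:tree_metric} applies with $X$ as the intermediate cloud. Failing that, we repeat its short Cauchy--Schwarz argument inline.
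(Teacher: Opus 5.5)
Your proposal is correct and follows essentially the same route as the paper: the three-term decomposition through the tree-path metrics, the stretch bound $0\le d^{Z_k}_{T_X}(i,j)-\|z^{(k)}_i-z^{(k)}_j\|_2\le(\tau-1)R$ for the outer terms, Lemma~\ref{lem:tree_metric} for the middle term, and $|a^2-b^2|\le 2R|a-b|$ summed over the $n(n-1)$ off-diagonal entries. The only cosmetic difference is that you route the middle term through $d^{X}_{T_X}$, whereas the paper's phrasing suggests applying Lemma~\ref{lem:tree_metric} to $(Z_1,Z_2)$ directly and then using $\|A_{Z_1}[P_X]-A_{Z_2}[P_X]\|_2\le\varepsilon_1+\varepsilon_2$; that version also removes your worry about $X$ living in a different ambient dimension.
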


\begin{proof} See Appendix Section A.3.
\end{proof}

Combined with the result of Lemma~\ref{lem:1}, the approximate Euclidean distance agreement established by Theorem~\ref{thm:mst_to_linear} provides geometric motivation for seeking a low-distortion orthogonal alignment, although it does not by itself provide a quantitative alignment guarantee. Overall, our analysis provides (i) an unconditional guarantee on the induced tree-metric
alignment and (ii) a conditional route to Euclidean near-isometry in regimes
where stretch is moderate.


\paragraph{Discussion and limitations.}
We want to emphasize that Theorem~\ref{thm:mst_to_linear} provides only a \emph{conditional guarantee}, and
MST constraints alone do not uniquely determine a latent embedding, but they provably enforce agreement of induced tree metrics (Lemma~\ref{lem:tree_metric}). When the learned representations additionally have a small tree stretch (Definition~\ref{def:stretch}), this implies approximate pairwise-distance agreement, which provides geometric motivation for seeking an orthogonal alignment.

\subsection{Training methodology and stitching}
\label{sec:schemes}

We incorporate \eqref{eq:topo-loss} into the training framework of the independently trained autoencoders, allowing them to produce more compatible latent spaces. Each autoencoder is regularized so that its latent space preserves the topological structure of the \emph{input} space. Concretely, for $\mathrm{AE}=(E,D)$, we train with
\begin{equation}
\label{eq:d2l}
\mathit{L} = \|x - D(E(x))\|^2 + \lambda\, L_{\mathrm{topo}}(x,E(x)).
\end{equation}
Because both $\mathrm{AE}_1$ and $\mathrm{AE}_2$ are trained to preserve the same reference topology (the data geometry induced by the input metric), their latent spaces are encouraged to share a common topological skeleton, despite being learned by different architectures. Note that this loss does not require any information to be exchanged between the two models $\mathrm{AE}_1$ and $\mathrm{AE}_2$ for training.

\paragraph{Stitching after training.}
After training, we fit an affine transformation $T$ using a
small calibration set $D_T$ by solving \eqref{eq:direct-alignment} with $T(z)=Tz$.
At inference time, stitching is performed by composing the encoder, the learned transformation,
and the decoder, e.g.,
\[
\hat x = D_1(T(E_2(x))),
\]
and symmetrically for the opposite direction.


\section{Experiments}

We evaluate the performance of the proposed topology-preserving training and investigate its efficacy in achieving interoperability across independently trained
models/representations when the models are connected post hoc by a lightweight affine connector. We report the performance across multiple different use cases and data domains: (i) image compression (section \ref{sec:results_images}), (ii) wireless communication channel compression (section \ref{sec:results_csi}), (iii) sketch-based image retrieval performance (section \ref{sec:results_fscoco}). Additionally, in section \ref{sec:results_geometry} we present an empirical evaluation of the geometric diagnostics that probe the conditions used in our analysis in section \ref{sec:theory}.

\subsection{Stitched image compression and reconstruction}
\label{sec:results_images}

We consider an autoencoder-based compression framework in which two architecturally different autoencoder models are first trained independently and then, post-training, cross-model performance is evaluated. The evaluation is performed across \textbf{MNIST} \cite{deng2012mnist}, \textbf{CIFAR-10} \cite{krizhevsky2009learning}, \textbf{Tiny-ImageNet} \cite{le2015tiny} and \textbf{CelebA} \cite{liu2015faceattributes} datasets. As baseline metrics, we report the end-to-end performance of the two models ($\mathrm{L_{11}}$ and $\mathrm{L_{22}}$, i.e. encoder and decoder coming from the same jointly trained autoencoder without any regularization) and the direct cross model performance ($\mathrm{L_{12}}$ and $\mathrm{L_{21}}$, i.e. encoder and decoder coming from the different autoencoders and joined directly, without any form of alignment). Our method is indicated by $\mathrm{L^{Topo}}$. We compare the performance of our approach with the relative representation-based stitching ($\mathrm{L^{RR}}$) \cite{moschellarelative}, and the direct affine alignment-based stitching ($\mathrm{L^{DAA}}$) proposed in \cite{maiorca2024latent,lahner2024direct}. To ensure that the latent space alignment across independently trained models is not an artifact of the model architecture, we choose completely different model architectures for the two autoencoders. For the MNIST and CIFAR-10 datasets, the first model is an MLP-based model, and the other has a convolutional neural network (CNN) based architecture. In the Tiny-ImageNet and CelebA datasets, the first model is a CNN based model and the other model is a vision transformer (ViT) based model. All experimental results are averaged over 5 independent training runs with different initializations. Further details on the model architecture and training can be found in Appendix B.

\begin{table*}[t]
\centering
\caption{The table shows the interoperability results on image datasets. We compare the reconstruction NMSE (dB) for different methods across different encoder-decoder configurations. $\mathrm{L_{ij}}$ denotes the use of encoder $E_i$ with decoder $D_j$.}
\label{tab:results_images}
\begin{adjustbox}{max width=\linewidth}
\begin{threeparttable}
\begin{tabular}{l *{10}{S[table-format=-2.2]}} 
\toprule
\textbf{Dataset} &
{$\mathbf{L_{11}}$} & {$\mathbf{L_{22}}$} & {$\mathbf{L_{12}}$} & {$\mathbf{L_{21}}$} &
{\makecell{$\mathbf{L^{DAA}_{1\rightarrow T\rightarrow 2}}$ }} &
{\makecell{$\mathbf{L^{DAA}_{2\rightarrow T\rightarrow 1}}$ }} &
{\makecell{$\mathbf{L^{RR}_{12}}$ }} &
{\makecell{$\mathbf{L^{RR}_{21}}$ }} &
{\makecell{$\mathbf{L^{Topo}_{1\rightarrow T\rightarrow 2}}$}} &
{\makecell{$\mathbf{L^{Topo}_{2\rightarrow T\rightarrow 1}}$}} \\
\midrule
\textbf{MNIST}      & -15.38 & -16.64 & -0.11 & 1.19  & -11.09 & -11.16 & -2.23 & -2.98 & {\textbf{-12.81}} & {\textbf{-12.42}} \\
\textbf{CIFAR-10}   & -14.09 & -15.38 & -2.99 & -0.36 & -13.80 & -13.82 & -5.87 & -6.68 & {\textbf{-14.34}} & {\textbf{-14.35}} \\
\textbf{T-ImageNet} & -11.73 & -10.42 & 0.05  & -0.01 & -10.22 & -11.27 & -7.59 & -7.01 & {\textbf{-10.48}} & {\textbf{-11.28}} \\
\textbf{CelebA}     & -15.36 & -15.33 & -2.57 & -4.5  & -14.38 & -14.46 & -7.92 & -5.55 & {\textbf{-14.93}} & {\textbf{-14.92}} \\
\bottomrule
\end{tabular}
\end{threeparttable}
\end{adjustbox}
\end{table*}

As seen from Table~\ref{tab:results_images} the results for \emph{direct cross-model usage}, $\mathrm{L_{12}}$ and $\mathrm{L_{21}}$, are highly unreliable across all
image datasets. This reaffirms the observations from prior work that independently trained latent spaces are generally not directly compatible.

Fitting an \emph{affine} connector using a small calibration set substantially improves interoperability across all datasets. For example, on MNIST, the affine-stitched errors drop to $\mathrm{L^{DAA}_{1\rightarrow T\rightarrow 2}}=-11.09$ and $\mathrm{L^{DAA}_{2\rightarrow T\rightarrow 1}}=-11.16$, representing more than an order-of-magnitude improvement over direct cross-decoding. Similar behavior holds on the other datasets too.

Topology-preserving training yields an additional consistent gain over affine stitching.
On MNIST, topology reduces the affine-stitched error by $1.7$ dB for $L_{1\rightarrow T\rightarrow 2}$ and $1.25$ dB for $L_{2\rightarrow T\rightarrow 1}$.
Across all datasets, the $\mathrm{L^{Topo}}$ outperforms the existing alignment approaches, although in the Tiny-ImageNet data, the improvements are less pronounced, suggesting that in this setting, the affine connector already captures most of the alignable structure in that direction.



\subsection{CSI compression: interoperability beyond vision}
\label{sec:results_csi}

We next evaluate our approach on a compression task from wireless communications, where we compress the \emph{channel state information} (CSI). CSI is a structured, high-dimensional representation of the wireless channel; in real systems, including 5G systems, it is communicated under strict bandwidth constraints, making compression essential. The use of autoencoders is being extensively studied for the upcoming releases of the wireless communication standards \cite{lin2023overview}. From an ML perspective, CSI compression is analogous to learned image compression: an encoder maps an input tensor to a compact code, which is quantized and a decoder reconstructs the original signal, with quality measured by reconstruction error.

We consider 3GPP standards-compliant CSI datasets \textbf{CDL-A,C} and \textbf{UMa} \cite{3gpp38901}, where each sample is a complex-valued CSI tensor. We represent CSI using two real-valued channels (real and imaginary parts) and train the autoencoder with a quantization block (2-bit uniform scalar quantization \cite{malhotra2024exploiting}) after the encoding stage. Similar to the image experiments, 2 different model architectures (CsiNet based \cite{csinet}, and transformer based \cite{malhotra2024exploiting}) are trained independently. We evaluate interoperability by stitching encoders and decoders across model architectures.

\begin{table*}[t]
\centering
\caption{Results on channel state information (CSI) compression. Same notation as Table~\ref{tab:results_images}.}
\label{tab:results_csi}
\begin{adjustbox}{max width=\linewidth}
\begin{threeparttable}
\begin{tabular}{l S[table-format=1.4] S[table-format=1.4] S[table-format=1.4] S[table-format=1.4] S[table-format=1.4] S[table-format=1.4] S[table-format=1.4] S[table-format=1.4] S[table-format=1.4] S[table-format=1.4]}
\toprule
\textbf{Dataset} &
{$\mathbf{L_{11}}$} & $\mathbf{L_{22}}$ & $\mathbf{L_{12}}$ & $\mathbf{L_{21}}$ &
{\makecell{$\mathbf{L^{DAA}_{1\rightarrow T\rightarrow 2}}$ }} &
{\makecell{$\mathbf{L^{DAA}_{2\rightarrow T\rightarrow 1}}$ }} &
{\makecell{$\mathbf{L^{RR}_{12}}$}} &
{\makecell{$\mathbf{L^{RR}_{21}}$}} &
{\makecell{$\mathbf{L^{Topo}_{1\rightarrow T\rightarrow 2}}$}} &
{\makecell{$\mathbf{L^{Topo}_{2\rightarrow T\rightarrow 1}}$}} \\
\midrule
\textbf{CDL-A,C}              & {-16.7} & {-30.0} & {5.85} & {28.28} & {-11.27} & {-11.58} & {-0.31} & {0.56} & \textbf{-14.88} & \textbf{-12.67} \\
\textbf{UMa}              & {-4.48} & {-7.31} & {5.19} & {32.15} & {-2.68} & {-2.29} & {0.04} & {1.61} & \textbf{-3.54} & \textbf{-2.96} \\

\bottomrule
\end{tabular}
\end{threeparttable}
\end{adjustbox}
\end{table*}

Table~\ref{tab:results_csi} shows that direct cross-decoding fails substantially across all CSI datasets, indicating that independently trained representations are not immediately compatible. Fitting a lightweight affine connector improves performance, but a gap remains relative to within-model decoding. In contrast, topology-preserving training consistently improves stitched performance, reducing reconstruction error. Notably, the gains persist in the quantized setting, suggesting that the regularizer encourages a representation geometry that is robust to discretization artifacts.

\subsection{Sketch-based image retrieval}
\label{sec:results_fscoco}

To test the robustness of our framework beyond autoencoder-style reconstruction, we evaluate the method on a cross-modal retrieval setting of sketch-based image retrieval (SBIR). The objective of SBIR is to embed sketches and images into a common latent space in which images semantically similar to a hand-drawn sketch query can be retrieved. We build our evaluation framework based on the recent SBIR implementation in \cite{SBIR_method} and leverage the \textbf{FS-COCO} \cite{FSCOCOdata} dataset. We first train 2 models end-to-end, independently. The first framework uses the ViT-B model for learning the sketch and image embeddings, and the second framework uses the ConvNeXt \cite{liu2022convnet}. Based on the implementation in \cite{SBIR_method}, models pre-trained on ImageNet-1K are utilized and then fine-tuned independently on the FS-COCO dataset with their proposed ICon loss function as the base loss metric.  Post-training, one model produces sketch embeddings, and the other produces the image embedding. Table~\ref{tab:results_fscoco} reports mAP and Top-$k$ retrieval accuracy.
\begin{table*}[t]
\centering
\caption{FS-COCO sketch-based image retrieval. Higher is better. Affine denotes fitting an affine connector on a calibration set.}
\label{tab:results_fscoco}
\begin{adjustbox}{max width=\linewidth}
\begin{threeparttable}
\begin{tabular}{l S[table-format=1.3] S[table-format=1.3] S[table-format=1.3] S[table-format=1.3] S[table-format=1.4] S[table-format=1.4] S[table-format=1.4] S[table-format=1.4]}
\toprule
\textbf{Metric} &
{$\mathbf{L_{11}}$} & $\mathbf{L_{22}}$ & $\mathbf{L_{12}}$ & $\mathbf{L_{21}}$ &
{\makecell{$\mathbf{L^{DAA}_{1\rightarrow T\rightarrow 2}}$ }} &
{\makecell{$\mathbf{L^{DAA}_{2\rightarrow T\rightarrow 1}}$ }} &
{\makecell{$\mathbf{L^{Topo}_{1\rightarrow T\rightarrow 2}}$}} &
{\makecell{$\mathbf{L^{Topo}_{2\rightarrow T\rightarrow 1}}$}} \\
\midrule
\textbf{mAP}    & {0.636} & {0.524} & {0.002} & 0.002 & 0.348 & 0.317 & 0.378 & 0.388 \\
\textbf{Top-1}  & {53.7} & {41.9} & {0.0} & {0.0} & {23.0} & {20.71} & {26.27} & {27.12} \\
\textbf{Top-5}  & {75.6} & {64.5} & {0.1} & {0.1} & {47.3} & {43.12} & {50.54} & {51.52} \\
\textbf{Top-10} & {82.4} & {73.3} & {0.3} & {0.2} & {59.0} & {54.59} & {61.76} & {62.86} \\
\bottomrule
\end{tabular}
\end{threeparttable}
\end{adjustbox}
\end{table*}

Without alignment, cross-modal retrieval is essentially degenerate ($L_{12},L_{21}\approx 0$ across metrics),
indicating that independently trained sketch and image embedding spaces are not directly comparable.
An affine connector yields strong gains (mAP $0.3486/0.3171$; Top-10 $0.5905/0.5459$), and the proposed topology-preserving training improves the accuracies further. Especially, in the $L^{Topo}_{2\rightarrow T\rightarrow 1}$ case, with the ViT-based sketch encoder and Conv-based image encoder, the top-$1$ to top-$10$ retrieval accuracies are improved by $6$-$8\%$. 
These results highlight the capability of topology-preserving regularization in promoting interoperability for applications beyond reconstruction.

\subsection{Geometric diagnostics: tree-metric discrepancy}
\label{sec:results_geometry}

We next examine whether topology-preserving training moves representations toward the conditions used in our
analysis and promotes agreement of the \emph{induced tree-path metric} across independently trained latent spaces. To probe this mechanism empirically, we introduce a diagnostic that measures cross-model \emph{tree-metric discrepancy}. 
The tree-metric discrepancy measure is \emph{directly tied} to the mechanism enforced by the topological loss: if both encoders preserve the MST edge lengths of the same reference tree, then their induced tree-path distances must agree for all pairs, up to accumulated edge-wise discrepancies (Lemma~\ref{lem:tree_metric}).

\paragraph{Tree-metric discrepancy.}
We first sample a fixed held-out set $\{x_i\}_{i=1}^n$ and construct a reference tree
$T_X=(V,P_X)$ by computing the minimum spanning tree (MST) in input space, where $V=\{1,\dots,n\}$
and $P_X$ is the set of MST edges.
For each encoder $E_k$ ($k\in\{1,2\}$), we compute latents $z_i^{(k)}=E_k(x_i)$ and define the induced tree-path distance
\[
d^{Z_k}_{T_X}(i,j) := \sum_{(u,v)\in\mathrm{path}_{T_X}(i,j)} \|z_u^{(k)}-z_v^{(k)}\|_2.
\]

where, $\mathrm{path}_{T_X}(i,j)$ denotes the unique simple path between $i$ and $j$ in the tree.
We quantify the cross-model tree-metric discrepancy by
\[
\delta_{{tree}}(i,j) := \left| d^{Z_1}_{T_X}(i,j) - d^{Z_2}_{T_X}(i,j)\right|,
\]
and report summary statistics (mean / 95th / 99th percentile) over 20k sampled pairs $(i,j)$ in Table~\ref{tab:tau_results}. 

Across datasets, topology-preserving training leads to a pronounced reduction in the cross-space tree-metric discrepancy $\bm{\delta_{tree}}$. This pattern is consistent with the role of the topological loss: it encourages both models to induce similar path-length structure on the same MST skeleton, which is the quantity directly controlled by Lemma~\ref{lem:tree_metric}. A reduction in $\bm{\delta_{tree}}$ indicates that independently trained latent spaces encode a \emph{shared} topological backbone more consistently, making them easier to interconnect with a lightweight affine connector. In our experiments, topology-preserving training substantially reduces $\delta_{\mathrm{tree}}$ across datasets, and this reduction correlates with improved stitched performance.

\begin{table}[t]
\centering
\caption{Tree-metric discrepancy diagnostics.}
\label{tab:tau_results}
\begin{adjustbox}{max width=\linewidth}
\begin{threeparttable}
\begin{tabular}{l
S[table-format=2.2] S[table-format=2.2] S[table-format=2.2]
S[table-format=2.2] S[table-format=2.2] S[table-format=2.2]
S[table-format=4.1] S[table-format=4.1] S[table-format=4.1]
S[table-format=4.1] S[table-format=4.1] S[table-format=4.1]
}
\toprule
\multirow{2}{*}{\textbf{Dataset}} &
\multicolumn{3}{c}{\textbf{$\bm{\delta_{tree}}$ (no topo)}} &
\multicolumn{3}{c}{\textbf{$\bm{\delta_{tree}}$ (topo)}} \\
\cmidrule(lr){2-4}\cmidrule(lr){5-7}\cmidrule(lr){8-10}\cmidrule(lr){11-13}
& \textbf{Mean} & \textbf{95} & \textbf{99} & \textbf{Mean} & \textbf{95} & \textbf{99} \\
\midrule

\textbf{MNIST}    & {290.8} & {521.9} & {592.1} & {192.8} & {345.6} & {392.4} \\
\textbf{CIFAR}    & {54.4} & {90.0} & {107.0} & {13.0} & {21.3} & {25.0} \\
\textbf{T-ImageNet} & {542.8} & {927.2} & {1109.8} & {84.9} & {151.3} & {185.2} \\
\textbf{CelebA} & {133.2} & {254.3} & {298.7} & {99} & {188.9} & {218.9} \\

\bottomrule
\end{tabular}
\end{threeparttable}
\end{adjustbox}
\end{table}

\subsection{Ablation studies}
\label{sec:ablations}

We conduct ablations to assess (i) sensitivity to the topology regularization weight $\lambda$
and (ii) sample efficiency of the affine connector fitting stage (calibration set size $|D_T|$).
Unless stated otherwise, all other training and evaluation settings are identical to the main experiments.

\paragraph{Effect of topology regularization weight.}
Figures~\ref{fig:ablation_mnist} and~\ref{fig:ablation_csi} vary the
topology regularization weight $\lambda$ in \eqref{eq:d2l}.
We observe that performance improves substantially over the $\lambda \approx 0$ regime (which reduces to
standard training without topology preservation), and remains stable across a broad range of moderate
$\lambda$ values. For excessively large $\lambda$, performance can degrade, consistent with
over-regularization: enforcing topology too strongly can compete with reconstruction objectives and
reduce representational flexibility.
Overall, these ablations indicate that topology-preserving training is not brittle, and that selecting
$\lambda$ within a moderate range yields consistent gains.

\paragraph{Effect of calibration size for affine stitching.}
Figures~\ref{fig:ablation_mnist} and~\ref{fig:ablation_csi} also vary the number of calibration
examples used to fit the affine connector $T$ (i.e., $|D_T|$ in \eqref{eq:direct-alignment}).
Across both MNIST and CSI, topology-preserving training improves \emph{sample efficiency}: for a fixed
calibration budget, stitched performance is consistently better with topology than without.
Moreover, the marginal benefit of increasing $|D_T|$ diminishes beyond a moderate regime, suggesting that the principal role of the connector is to correct residual global misalignment once the training-time regularizer has aligned the two representations on a common geometric scaffold.

\begin{figure}[t]
\centering
\begin{minipage}[t]{0.48\linewidth}
  \centering
  \includegraphics[width=\linewidth]{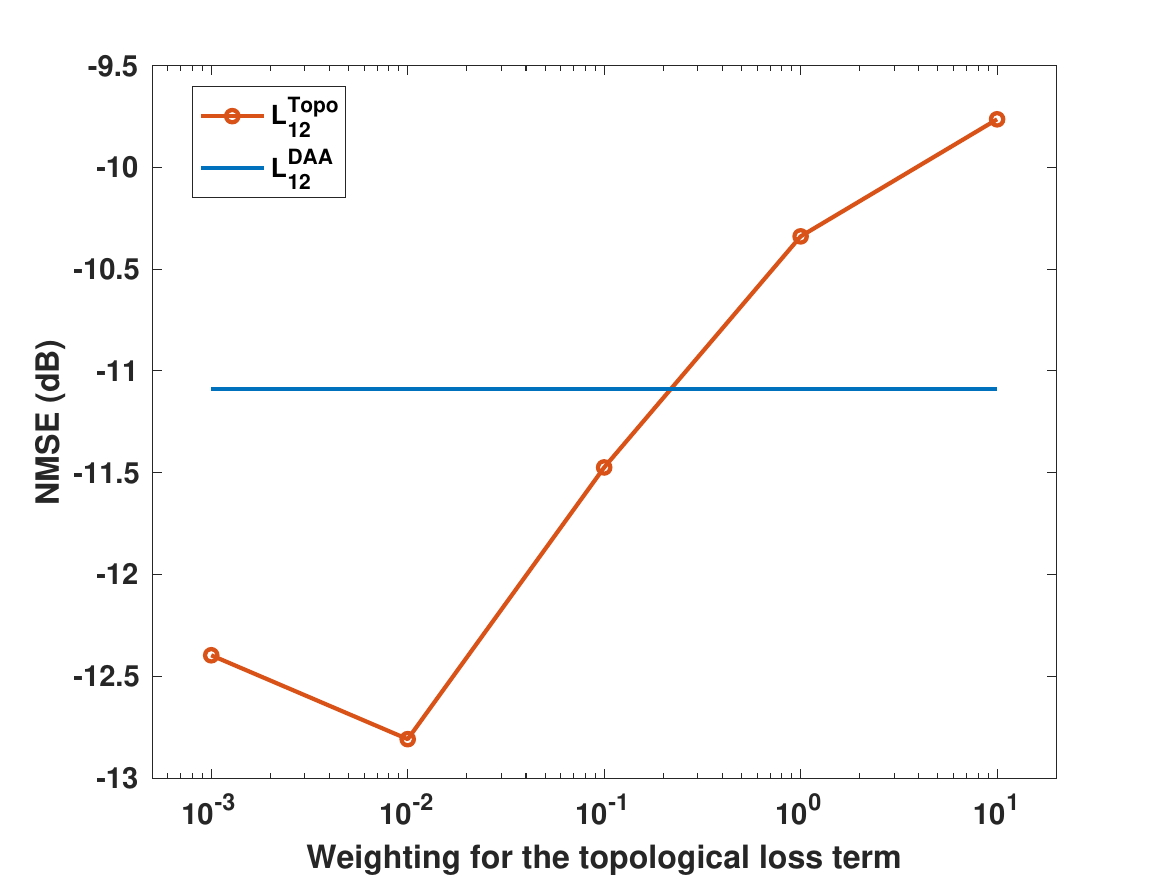}
  \par\smallskip\textbf{(a)} Effect of $\lambda$.
\end{minipage}
\hfill
\begin{minipage}[t]{0.48\linewidth}
  \centering
  \includegraphics[width=\linewidth]{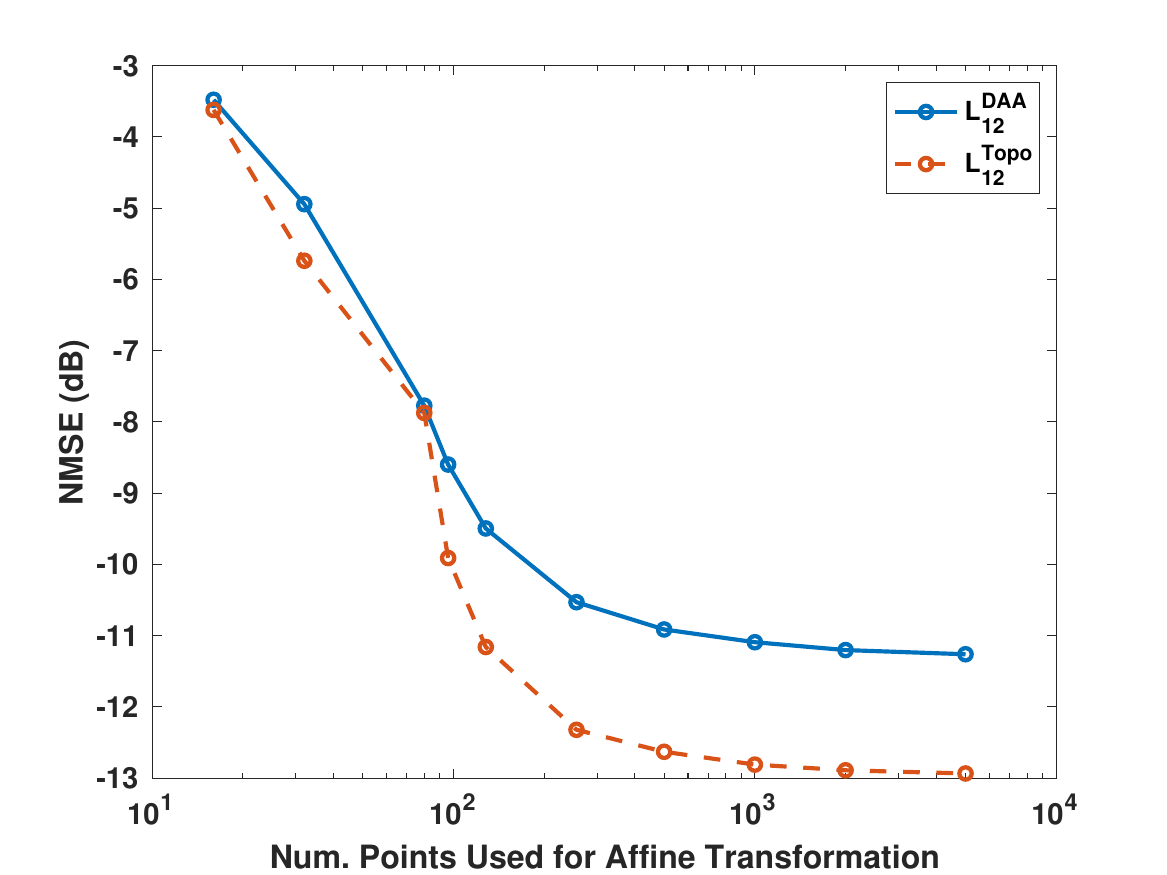}
  \par\smallskip\textbf{(b)} Effect of $|D_T|$.
\end{minipage}
\caption{MNIST ablations: sensitivity to topology weight $\lambda$ and calibration size $|D_T|$.}
\label{fig:ablation_mnist}
\end{figure}

\begin{figure}[t]
\centering
\begin{minipage}[t]{0.48\linewidth}
  \centering
  \includegraphics[width=\linewidth]{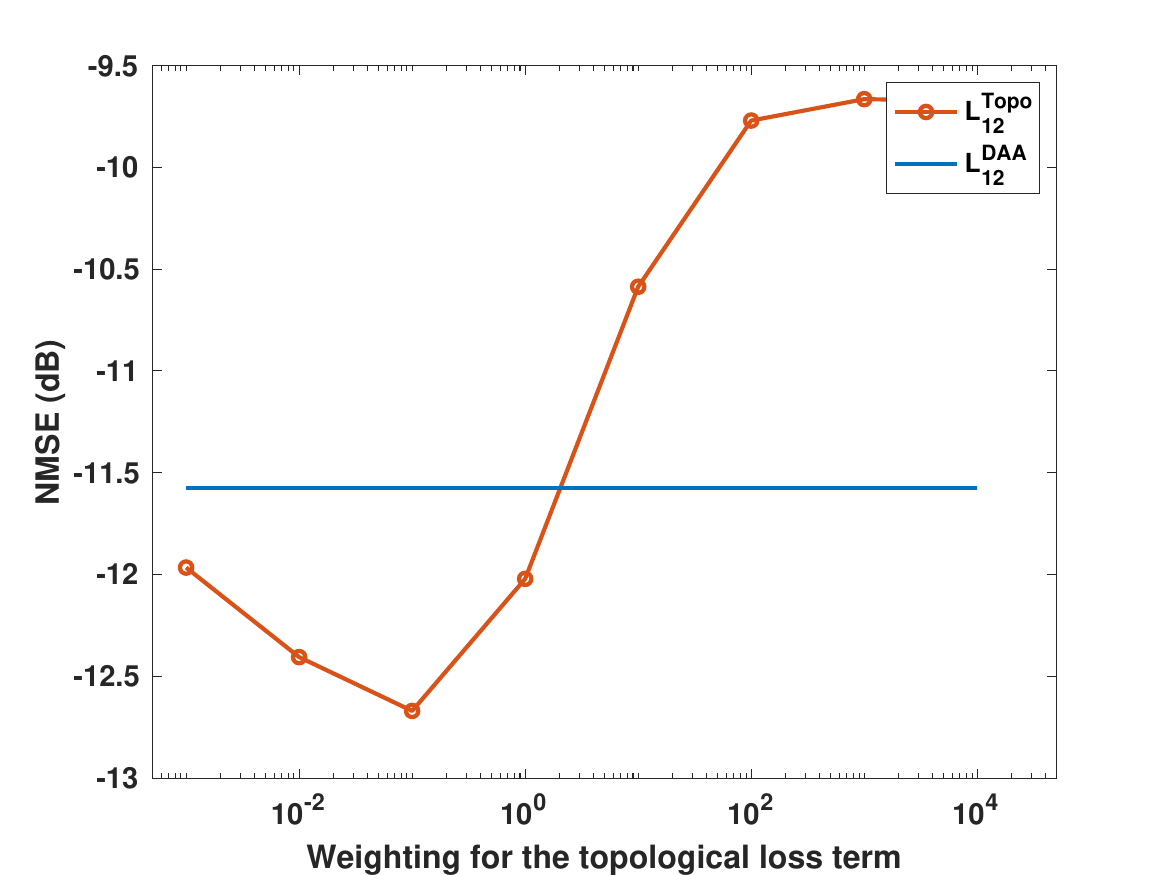}
  \par\smallskip\textbf{(a)} Effect of $\lambda$.
\end{minipage}
\hfill
\begin{minipage}[t]{0.48\linewidth}
  \centering
  \includegraphics[width=\linewidth]{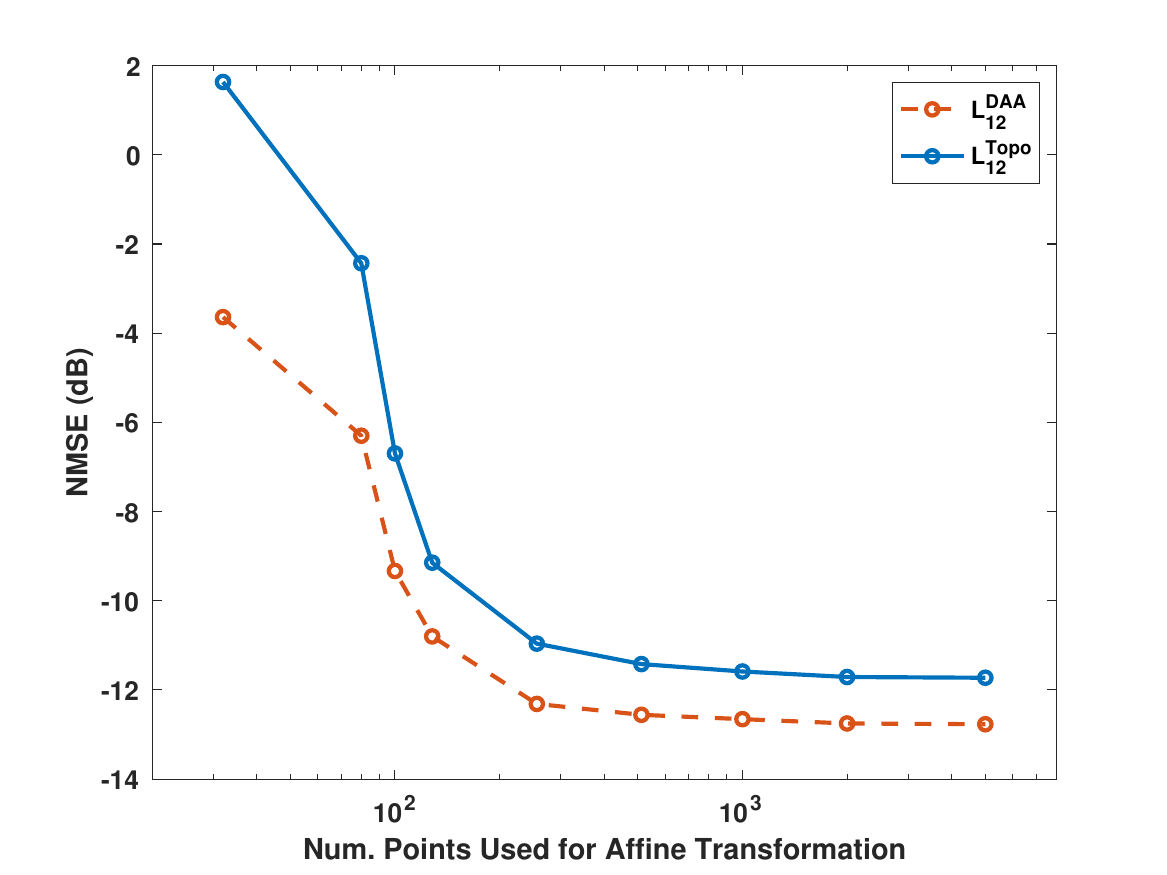}
  \par\smallskip\textbf{(b)} Effect of $|D_T|$.
\end{minipage}
\caption{CSI ablations: sensitivity to topology weight $\lambda$ and calibration size $|D_T|$.}
\label{fig:ablation_csi}
\end{figure}

\section{Conclusion}

We investigate the problem of zero-shot model stitching in autoencoders, demonstrating that topological regularization promotes a simplified, affine mapping between independent latent spaces. Our proposed framework introduces a training scheme aimed at preserving the topological structure of the input data within the latent space of an otherwise unconstrained autoencoder. We provide theoretical justification for how this topological loss controls agreement between induced tree-path metrics. Under the additional condition of bounded tree stretch, this agreement implies approximate Euclidean pairwise-distance agreement between representations. Through numerical experiments across heterogeneous domains, including both image and wireless datasets, we show that our method drastically simplifies latent-space alignment. Future work will explore alternative geometric losses to further refine the alignment of topological features.

\bibliographystyle{splncs04}
\bibliography{citations}
\appendix
\input{appendix.tex}
\end{document}

%% file: appendix.tex
\section{Proofs}
\label{app:proofs}
\subsection{Proof for Lemma 1}

\begin{proof}
Let $\bbH=\bbI-\tfrac{1}{n}\mathbf{1}\mathbf{1}^\top$ be the centering matrix. A classical identity in
multidimensional scaling (MDS) states that the (centered) Gram matrix can be recovered from squared
distances via double centering:
\begin{equation}
\label{eq:mds_double_center}
\mathbf{Z}\mathbf{Z}^\top \;=\; -\tfrac{1}{2}\,\bbH\,\Delta(\mathbf{Z})\,\bbH,
\end{equation}
and similarly $\mathbf{Y}\mathbf{Y}^\top = -\tfrac{1}{2}\,\bbH\,\Delta(\mathbf{Y})\,\bbH$.

Since $\Delta(\mathbf{Z})=\Delta(\mathbf{Y})$, applying \eqref{eq:mds_double_center} to both point clouds yields
\[
\mathbf{Z}\mathbf{Z}^\top \;=\; -\tfrac{1}{2}\,\bbH\,\Delta(\mathbf{Z})\,\bbH
\;=\; -\tfrac{1}{2}\,\bbH\,\Delta(\mathbf{Y})\,\bbH
\;=\; \mathbf{Y}\mathbf{Y}^\top.
\]
Denote this common Gram matrix by $\bbG\in\mathbb{R}^{n\times n}$, so $\bbG=\mathbf{Z}\mathbf{Z}^\top=\mathbf{Y}\mathbf{Y}^\top$.


Assume $\mathbf{Z}$ has full column rank $d$ (hence $\mathrm{rank}(\bbG)=d$). Take a thin SVD of $\mathbf{Z}$:
\[
\mathbf{Z}=\bbU\bbSigma \bbV^\top,
\]
where $\bbU\in\mathbb{R}^{n\times d}$ and $\bbV\in\mathbb{R}^{d\times d}$ have orthonormal columns and
$\bbSigma\in\mathbb{R}^{d\times d}$ is diagonal with positive entries. Then
\[
\bbG=\mathbf{Z}\mathbf{Z}^\top = \bbU\bbSigma \bbV^\top \bbV\bbSigma \bbU^\top = \bbU\bbSigma^2 \bbU^\top.
\]
Since $\mathbf{Y}\mathbf{Y}^\top=G$ as well and $\mathrm{rank}(\mathbf{Y})=\mathrm{rank}(\bbG)=d$, a thin SVD of
$\mathbf{Y}$ can be written as
\[
\mathbf{Y}=\bbU\bbSigma \bbW^\top
\]
for some orthogonal $\bbW\in\mathbb{R}^{d\times d}$ (the left singular vectors $\bbU$ and singular values $\Sigma$
are determined by $\bbG=\bbU\bbSigma^2\bbU^\top$ up to an orthogonal change of basis in $\mathbb{R}^d$).
Therefore,
\[
\begin{aligned}
\mathbf{Y}
  &=\bbU\bbSigma \bbW^\top\\
  &=(\bbU\bbSigma \bbV^\top)(\bbV \bbW^\top)
   =\mathbf{Z}\bm{Q},\\
\bm{Q}&:=\bbV \bbW^\top.
\end{aligned}
\]
Since $V$ and $W$ are orthogonal, $\bm{Q}$ is orthogonal as well: $\bm{Q}^\top \bm{Q} = \bbI$.
\end{proof}

\subsection{Proof for Lemma 2}
\begin{proof}
Fix $(i,j)$ and let $\mathrm{path}_{T_X}(i,j)=\{e_1,\dots,e_m\}$ be the (ordered) edge set along the unique tree path from $i$ to $j$, where $m=|\mathrm{path}_{T_X}(i,j)|$.
Write each edge as $e_\ell=(u_{\ell},v_{\ell})\in P_X$, and set
\[
\delta_\ell :=
\|z_{u_\ell}-z_{v_\ell}\|_2
-\|y_{u_\ell}-y_{v_\ell}\|_2.
\]
Then
\begin{align*}
\big| d_{T_X}^{Z}(i,j) - d_{T_X}^{Y}(i,j) \big|
&= \left|\sum_{\ell=1}^m \delta_\ell\right| \\
&\le \sum_{\ell=1}^m |\delta_\ell|.
\end{align*}
Define $a\in\mathbb{R}^{|P_X|}$ indexed by edges $e=(u,v)\in P_X$ as
\[
\begin{aligned}
a_e &:= \|z_u-z_v\|_2 - \|y_u-y_v\|_2,\\
a   &:= A_Z[P_X]-A_Y[P_X].
\end{aligned}
\]
The sum above is over the $m$ coordinates $\{|a_{e_\ell}|\}_{\ell=1}^m$, hence by Cauchy--Schwarz,
\[
\sum_{\ell=1}^m |a_{e_\ell}| \le \sqrt{m}\,\|a\|_2
= \sqrt{m}\,\|A_Z[P_X]-A_Y[P_X]\|_2.
\]
Since $m\le \mathrm{diam}(T_X)$ for all $(i,j)$, taking the maximum over $(i,j)$ yields the claim.
\end{proof}

\subsection{Proof for Theorem 1}

\begin{proof}
For any pair $(i,j)$, decompose
\begin{equation}
\begin{aligned}
&\|z^{(1)}_i-z^{(1)}_j\|_2
 -\|z^{(2)}_i-z^{(2)}_j\|_2\\
&=\big(\|z^{(1)}_i-z^{(1)}_j\|_2
       -d^{Z_1}_{T_X}(i,j)\big)\\
&\quad+\big(d^{Z_1}_{T_X}(i,j)
            -d^{Z_2}_{T_X}(i,j)\big)\\
&\quad+\big(d^{Z_2}_{T_X}(i,j)
            -\|z^{(2)}_i-z^{(2)}_j\|_2\big).
\end{aligned}
\end{equation}
The first and last terms are bounded by stretch since $\|z_i-z_j\|_2\le d^{Z}_{T_X}(i,j)\le
\tau\|z_i-z_j\|_2\le \tau R$, giving
$0\le d^{Z}_{T_X}(i,j)-\|z_i-z_j\|_2\le(\tau-1)R$ for $Z\in\{Z_1,Z_2\}$.
The middle term is controlled by Lemma 2 together with the triangle inequality and
the definitions of $\varepsilon_1,\varepsilon_2$ (both $Z_1$ and $Z_2$ match the same target edge
lengths on $P_X$ up to $\varepsilon_k$).
This yields the uniform bound on absolute distance differences.
The squared-distance matrix bound follows from $|a^2-b^2|=|a-b||a+b|\le 2R|a-b|$ and summing over all
unordered pairs.
\end{proof}


\section{Experimental Details}
\label{app:exp_details}

\subsection{CIFAR-10}
\textbf{Data} RGB images of size $3\times 32\times 32$; pixel intensities rescaled to $[0,1]$. 

\noindent\textbf{Models}

\emph{Linear AE:} flatten input to $3072$. Encoder MLP
$3072\!\rightarrow\!256\!\rightarrow\!128\!\rightarrow\!128\!\rightarrow\!d$
with ReLU after each layer; decoder
$d\!\rightarrow\!128\!\rightarrow\!128\!\rightarrow\!256\!\rightarrow\!3072$
with ReLU in hidden layers and a sigmoid output (reshaped to $3\times 32\times 32$).

\emph{Conv AE:} 4 strided conv blocks with channels
$3\!\rightarrow\!16\!\rightarrow\!32\!\rightarrow\!64\!\rightarrow\!d/4$
(ReLU), then flatten to $z\in\mathbb{R}^d$; decoder uses 4 transposed-conv blocks
$d/4\!\rightarrow\!64\!\rightarrow\!32\!\rightarrow\!16\!\rightarrow\!3$
with ReLU and a sigmoid output.

\noindent\textbf{Training} Adam, learning rate $10^{-3}$, batch size $128$, $200$ epochs. The best validation checkpoint is retained. $5$ independent runs. Latent size 64.

\noindent\textbf{Stitching} Affine transformation estimated using $1000$ samples.

\subsection{MNIST}

\textbf{Data:} Grayscale images of size $1\times 28\times 28$; pixel intensities rescaled to $[0,1]$.

\noindent\textbf{Models:}

\emph{Linear AE:} flatten input to $784$. Encoder MLP
$784\!\rightarrow\!256\!\rightarrow\!128\!\rightarrow\!d$
with ReLU after each layer; decoder
$d\!\rightarrow\!128\!\rightarrow\!256\!\rightarrow\!784$
with ReLU in hidden layers and a sigmoid output (reshaped to $1\times 28\times 28$).

\emph{Conv AE:} stride-2 conv stack with channels
$1\!\rightarrow\!16\!\rightarrow\!32\!\rightarrow\!64\!\rightarrow\!d/4$
(ReLU), then flatten to $z\in\mathbb{R}^d$; decoder uses transposed-conv stack
$d/4\!\rightarrow\!64\!\rightarrow\!32\!\rightarrow\!16\!\rightarrow\!1$
with ReLU and a sigmoid output.

\noindent\textbf{Training:} Adam, learning rate $10^{-3}$, batch size $128$, $150$ epochs, Kaiming initialization. The best validation checkpoint is retained. $5$ independent runs. Latent size 64.

\noindent\textbf{Stitching:} Affine transformation estimated using $1000$ samples.

\subsection{Tiny-ImageNet}

\textbf{Data:} TinyImageNet-200 RGB images of size $3\times 64\times 64$; rescaled to $[0,1]$. 

\noindent\textbf{Models:}

\emph{Conv AE:} encoder with 4 strided conv blocks
$3\!\rightarrow\!32\!\rightarrow\!64\!\rightarrow\!128\!\rightarrow\!256$
using BN+GELU, followed by an FC layer to latent $z\in\mathbb{R}^{64}$; decoder uses FC + 4 transposed-conv blocks
$256\!\rightarrow\!128\!\rightarrow\!64\!\rightarrow\!32\!\rightarrow\!3$
with BN+GELU, output is linear clamped to $[0,1]$.

\emph{ViT AE:} patch embedding with \emph{patch size} $8$, $4$ transformer encoder blocks, then features pooled together, then FC to $z\in\mathbb{R}^{64}$; transformer decoder reconstructs image patches. Embedding dim $256$, $4$ encoder transformer blocks, $2$ decoder transformer blocks, $8$ heads.

\noindent\textbf{Training:} Adam, learning rate $2\times 10^{-4}$, batch size $512$, $100$ epochs, Kaiming initialization. The best validation checkpoint is retained. $5$ independent runs. Latent size 64.

\noindent\textbf{Stitching:} Affine transformation estimated using $1000$ samples.

\subsection{CelebA}

\textbf{Data:} CelebA images resized to $3\times 128\times 128$ and converted pixel range $[0,1]$); No augmentation or normalization. 

\noindent\textbf{Models:}

\emph{Conv AE:} encoder with 5 strided conv blocks (GELU) with channels
$3\!\rightarrow\!32\!\rightarrow\!64\!\rightarrow\!128\!\rightarrow\!256\!\rightarrow\!256$
and spatial resolution
$128\!\rightarrow\!64\!\rightarrow\!32\!\rightarrow\!16\!\rightarrow\!8\!\rightarrow\!4$,
then FC to latent $z\in\mathbb{R}^d$; decoder uses FC to $256\times 4\times 4$ followed by 5 transposed-conv blocks (GELU) with a final sigmoid output.

\emph{ViT AE:} patch-based transformer encoder/decoder with patch size $8$; Embedding dim $192$, $4$ encoder transformer blocks, $4$ decoder transformer blocks, $6$ heads; latent $z\in\mathbb{R}^d$.

\noindent\textbf{Training:} Adam, learning rate $10^{-4}$, batch size $128$, $100$ epochs. The best validation checkpoint is retained. $5$ independent runs. Latent size $64$.

\noindent\textbf{Stitching:} Affine transformation estimated using $1000$ samples.

\subsection{CDL-A,C and UMa}

\textbf{Data:} CSI tensors are normalized using training-set mean/std, and represented as two channels (real/imag).

\noindent\textbf{Models:} End-to-end is performed as \texttt{Encoder} $\rightarrow$ \texttt{Quantization} $\rightarrow$ \texttt{Decoder}, with a straight-through quantizer using $2$-bit uniform scalar quantization. 

\noindent\textbf{Training:} Adam, initial learning rate $10^{-3}$, batch size $128$, $150$ epochs, with exponential LR decay every $5$ epochs (factor $0.9$). The best checkpoint is selected by the lowest held-out validation loss. $5$ independent runs.

\noindent\textbf{Stitching:} Affine transformation estimated using $1000$ samples.